\newif\ifICML\ICMLfalse
\newcommand{\infinity}{\infty}
\newcommand{\remove}[1]{}
\newcommand{\point}[1]{\mathbf{#1}}
\newcommand{\pointx}{\point{x}}
\newcommand{\pointy}{\point{y}}
\newcommand{\pointz}{\point{z}}
\newtheorem{theorem}{Theorem}
\newtheorem{lemma}{Lemma}
\icmltitlerunning{Submission and Formatting Instructions for ICML 2019}
\begin{document}

\twocolumn[
\icmltitle{A Simple Explanation for the Existence of Adversarial Examples\\
with Small Hamming Distance}



\icmlsetsymbol{equal}{*}

\begin{icmlauthorlist}
\icmlauthor{Adi Shamir}{wi}
\icmlauthor{Itay Safran}{wi}
\icmlauthor{Eyal Ronen}{tau}
\icmlauthor{Orr Dunkelman}{hu}

\end{icmlauthorlist}

\icmlaffiliation{wi}{Weizmann Institute of Science, Rehovot, Israel}
\icmlaffiliation{tau}{Tel Aviv University, Tal Aviv, Israel}
\icmlaffiliation{hu}{Haifa University, Haifa, Israel}

\icmlcorrespondingauthor{Adi Shamir}{adi.shamir@weizmann.ac.il}
\icmlcorrespondingauthor{Itay Safran}{itay.safran@weizmann.ac.il}

\icmlkeywords{Machine Learning, ICML}

\vskip 0.3in
]



\printAffiliationsAndNotice{}  
\else
\usepackage{amsmath,amsfonts}
\usepackage{natbib}
\usepackage{algorithm}
\usepackage{algorithmic}
\begin{document}
\title{A Simple Explanation for the Existence of Adversarial Examples\\
with Small Hamming Distance}

\author{Adi Shamir\inst{1}
	\and Itay Safran\inst{1}
	\and Eyal Ronen\inst{2}
	\and Orr Dunkelman\inst{3}}
\institute{%
	Computer Science Department, The Weizmann Institute, Rehovot, Israel
	\and
	Computer Science Department, Tel Aviv University, Tel Aviv, Israel
	\and
	Computer Science Department, University of Haifa, Israel\\
}

\maketitle
\fi
\begin{abstract}
The existence of adversarial examples in which an imperceptible change in the input can fool well trained neural networks
was experimentally discovered by Szegedy et al in 2013, who called them ``Intriguing properties of neural networks''.
Since then, this topic had become one of the hottest research areas within machine learning, but the ease with which we can switch
between any two decisions in targeted attacks is still far from being understood, and in particular it is not clear
which parameters determine the number of input coordinates we have to change in order to mislead the network.
In this paper we develop a simple mathematical framework which enables us to think about this baffling phenomenon from
a fresh perspective, turning it into a natural consequence of the geometry of $\mathbb{R}^n$ with the $L_0$ (Hamming) metric,
which can be quantitatively analyzed. In particular, we explain why we should expect to find targeted adversarial examples
with Hamming distance of roughly $m$ in arbitrarily deep neural networks which are designed to distinguish between $m$ input classes.
\end{abstract}

\section{Introduction}
\label{sec:Introduction}

Adversarial examples in neural networks were first described in the seminal paper of~\citet{szegedy2013intriguing}. It was followed by a large and rapidly increasing literature on the applications and implications of such examples, which concentrated primarily on the issue of how to generate them more efficiently, and on the dual issues of how to exploit them and how to avoid them (see Fig.~\ref{fig:tabby_gauacamole} for an example). Our goal in this paper is different -- we describe a simple mathematical framework which explains why adversarial examples with a small Hamming distance are a natural byproduct (rather than a freak phenomenon) whenever we partition the high dimensional input space $\mathbb{R}^n$ into a bounded number of labelled regions by neural networks.

\begin{figure}[ht]
	\vskip 0.2in
	\begin{center}
		\centerline{\includegraphics[width=\columnwidth]{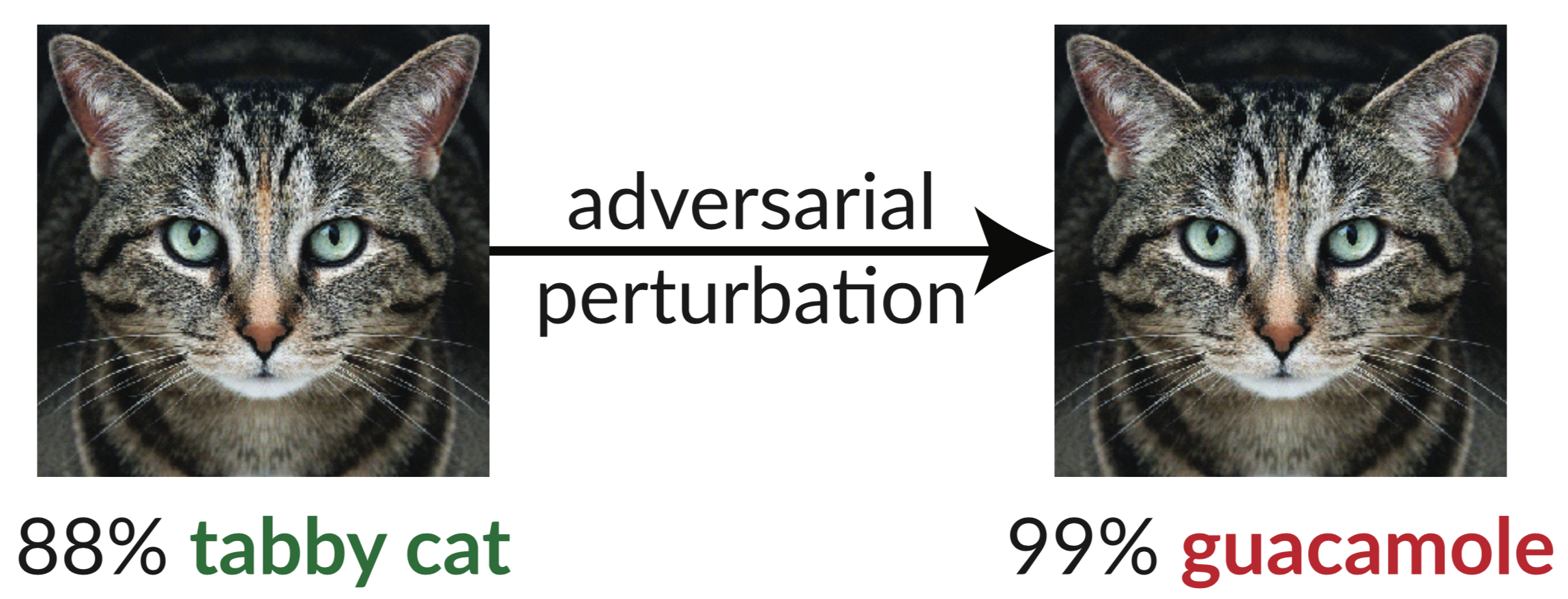}}
		\vskip -0.1in
		\caption{A small change imperceptible to humans misleads the InceptionV3 network into classifying an image of a tabby cat as guacamole. Image taken from https://github.com/anishathalye/obfuscated-gradients.}
		\label{fig:tabby_gauacamole}
	\end{center}
	\vskip -0.2in
\end{figure}

Previous attempts to explain the existence of adversarial examples dealt primarily with untargeted attacks, in which we are given a neural network and some input $\pointx$ which is classified by the network as belonging to class $C_1$, and want to find some $\pointy$ with $distance(\pointx,\pointy)< \epsilon$ which is classified as belonging to a different class.
In this paper we consider the far stronger model of targeted attack, in which we are given any two classes $C_1$ and $C_2$, along with any point $\pointx \in C_1$, and our goal is to find some nearby $\pointy$ which is inside $C_2$. Our ability to do so suggests that all the classes defined by neural networks are intertwined in a fractal-like way so that any point in any class is simultaneously close to all the boundaries with all the other classes, which seems to be very counterintuitive and mysterious.

To clarify this mystery, we describe in this paper a new algorithm for producing adversarial examples, which has no practical advantages over existing gradient based techniques: it is more complicated, it is much slower, it requires full (white box) knowledge of the neural network, it uses infinitely precise real numbers, etc. However, it will be easy to quantitatively analyze its behaviour and to understand why it produces adversarial examples with a particular Hamming distance. To simplify our analysis, we eliminate all the domain-specific constraints imposed by real-world considerations. In particular, we assume that all the values we deal with (inputs, outputs, weights, etc) are real numbers which can range between $-\infinity$ and $ + \infinity $, and thus our analysis cannot be directly applied to particular domains such as images, in which all the grey levels must be integers which can only range in $[0,255]$.

Most of the adversarial examples found in the literature use one of two natural distance functions to measure the proximity between $\pointx$ and $\pointy$: either the $L_2$ norm which measures the Euclidean distance between the two vectors, or the $L_0$ ``norm''\footnote{We put the word norm in quotation marks since $L_0$ does not satisfy the requirement that $L_0(c\cdot\pointx)=cL_0(\pointx)$, but the distance function it defines satisfies all the other requirements such as the triangle inequality.} which measures the Hamming distance between the two vectors (ie, how many input coordinates were changed).  When applied to an input vector, the first type of adversarial examples are allowed to change all the input coordinates but each one of them only by a tiny amount, whereas the second type of adversarial examples are allowed to change only a few input coordinates, but each one of them can change a lot. In this paper we make essential use of the properties of $L_0$, and thus our arguments cannot be directly applied to the $L_2$ or any other norm.

Finally, for the sake of simplicity we consider in this paper only neural networks that use the ReLU function (which replaces negative values by zero and keeps positive values unchanged) as their source of nonlinearity (see Fig.~\ref{fig:dnn_max} for an illustration), but we can easily extend our results to other piecewise linear functions. In terms of architecture, we allow the neural networks to have arbitrary depth and width, to use any number of convolutional or maxpooling layers that are commonly used in practice, and to use an arbitrary label choosing mechanism including the popular softmax activation.

\begin{figure}[ht]
	\vskip -0.2in
	\begin{center}
		\centerline{\includegraphics[width=0.75\columnwidth]{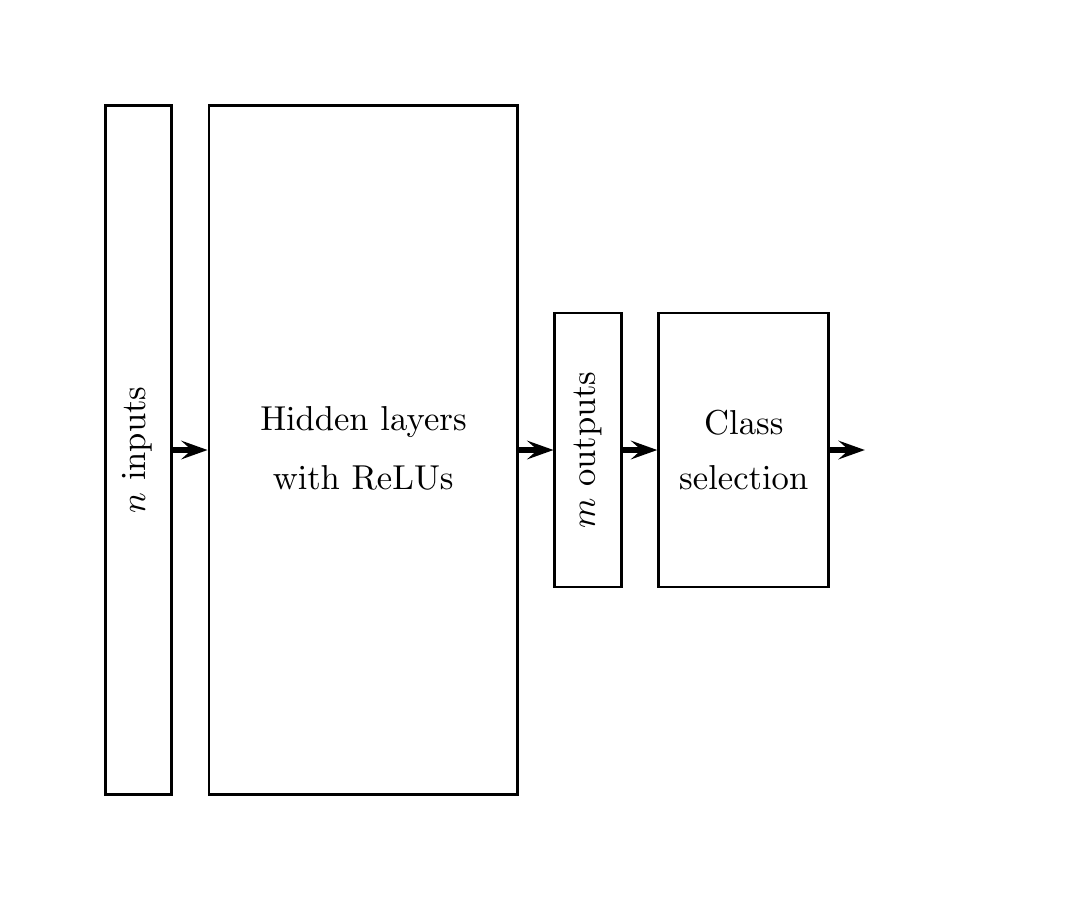}}
		\vskip -0.2in
		\caption{Our results apply to arbitrarily deep architectures employing a ReLU activation, including convolutional layers and maxpooling layers, and using an arbitrary class selection procedure.}
		\label{fig:dnn_max}
	\end{center}
	\vskip -0.2in
\end{figure}

To demonstrate the kinds of results we get in our quantitative analysis of the problem, consider an arbitrary partition of
$\mathbb{R}^n$ into a million regions with twenty linear separators. Our analysis shows the surprising result that changing just two coordinates (eg, moving left then up) typically suffices to move from any point in a given region to some point in another given region whenever the number of coordinates $n$ exceeds about $250$, and we verified this prediction experimentally for a large number of separators and cell pairs. When we consider the general case of an arbitrary
neural network which is designed to distinguish between $m$ possible classes, we provide a simple explanation why it should suffice
to change only about $m$ of the $n$ input coordinates in order to be able to move from any point in any region to some point in any other region. Once again, we confirmed our analysis by running actual experiments on the MNIST network (for which $n=784$ and $m=10$), which found adversarial examples in which different modifications in the same subset of $11$ pixels was sufficient in order to switch the network's decision from a high confidence in the original digit to a high confidence in any other digit.

\subsection{Related Work}

Surveying all the papers published so far about adversarial examples is well outside our scope, and thus we describe in this subsection only some of the more recent works, focusing for the most part on theoretical efforts to prove or explain the existence of adversarial examples.

Following the works of \citet{szegedy2013intriguing} and \citet{biggio2013evasion}, methods to produce adversarial examples for various machine learning tasks were developed, including computer vision, speech recognition and malware detection \citep{goodfellow6572explaining,carlini2018audio,grosse2016adversarial}, as well as applications \citep{DeepCAPTCHA}. Consequentially, several papers proposed various defensive techniques, including gradient masking \citep{papernot2017practical}, defensive distillation \citep{papernot2016distillation} and adversarial training \citep{madry2017towards,kolter2017provable}. However, these were eventually rendered ineffective by more sophisticated attacks \citep{carlini2017towards,athalye2018obfuscated}. Attempts to instead detect whether the given input was adversarially generated were also made \citep{feinman2017detecting,grosse2017statistical,metzen2017detecting,li2017adversarial} but were quickly broken as well \citep{carlini2017adversarial}, and it seems like the 'attackers' are winning the arms-race against the 'defenders'. In addition, there are currently known techniques capable of generating 3D-printed adversarial examples that are consistently misclassified from different angles \citep{athalye2018synthesizing}, posing a tangible threat for safety critical applications such as autonomous driving, and emphasizing the importance of gaining a better understanding of this phenomenon.


Recently, a few papers addressed the theoretical question of why adversarial examples exist in machine learning. \citet{schmidt2018adversarially} provides evidence that adversarially robust classifiers require more data by examining a distribution of a mixture of two Gaussians, whereas \citet{bubeck2018adversarial} suggests that the bottleneck is the computational intractability of training a robust classifier rather than information theoretical constraints.

A number of papers show the existence of adversarial examples by using strong concentration of measure tools from probability theory and geometry, taking advantage of the fact that in a high dimensional space most points $\pointx$ in a ``blob'' are located next to its boundary. However, such arguments can only be used in untargeted attacks since there is likely to be only one other ``blob'' which is next to $\pointx$ and across the boundary. In addition, for their argument to work, an assumption that each class has a large measure must be made, whereas we make no such assumptions. Moreover, all the naturally occurring examples of a particular class $C$ may be a tiny fraction of all the inputs which are recognized by the network as belonging to class $C$. If they are all located at the center, far away from any boundary, then the argument fails to explain why natural examples can be modified to get adversarial examples. Typical examples of such results are \citet{shafahi2018adversarial} which shows that any measurable predictor on the unit hypercube $ [0,1]^n $ will have an adversarial example if perturbed strongly enough, and \citet{fawzi2018adversarial} which shows that any predictor on a data distribution supported on a hypersphere is vulnerable to small $ L_2 $ noise. While \citet{fawzi2018adversarial} assume a distribution that arguably does not capture real life data, \citet{shafahi2018adversarial} provides a more widely applicable result which does not exploit any inherent structure of a certain predictor, but only guarantees the existence of an untargeted adversarial example in the $ L_0 $ metric when $ \theta(\sqrt{n}) $ pixels are perturbed. On the other hand, there is experimental evidence that in the cifar10 dataset, perturbing a single pixel suffices to change the prediction of a deep neural network on the majority of the test set \citep{su2017one}. Unlike these works which study untargeted attacks, in this paper we derive stronger results in the targeted regime, modifying the \emph{entire} confidence output vector to values of our choice by slightly perturbing \emph{any} given input. In addition, our analysis requires fewer pixels than \citet{shafahi2018adversarial}, albeit at the cost of possibly straying from the pixel range of $ [0,255] $.

\section{Targeted Adversarial Examples in Linear Partitions of $\mathbb{R}^n$}
\label{sec:Linear}

We start our analysis by considering the simplest case in which the mapping from inputs to outputs is determined by a hyperplane arrangement. Note that this setting differs from linear separators for multiclass classification, where the maximum over several linear classifiers is used to determine the classification. Instead, we are given
$m$ hyperplanes of the form $\sum_{j=1}^{n}{a_{i}^{j}x_{j}}+b_{i}$ for $i=1,\ldots ,m$ which split $\mathbb{R}^n$ into {\it cells}, and assume that they are in general position (if not, we can always perturb them slightly to prevent potentially problematic coincidences). If we denote by $M$ the $m \times n$ matrix whose entries are the $a_i^j$ coefficients and by $B$ the column vector whose entries are the $b_i$ constants, then each cell in the partition is defined by
a particular vector $S$ of $m$ $\pm$ signs, and consists of all the points $\pointx$ in $\mathbb{R}^n$ for which $M\pointx+B$ is a column vector of $m$ numbers whose signs are as specified in $S$. The maximal possible number of cells is $\sum_{i=0}^{n}\binom{m}{i}$ \citep{zaslavsky1975facing}. The predictor associates labels $C_t$ (which are typically object classes such as "horse" or "car") to some of the cells (see Fig.~\ref{fig:linearclassifier}). The union of the cells labeled by $C_t$ can be arbitrarily shaped (in particular it need not be connected, convex, or hole-free). The predictor classifies any input vector $(x_1, \ldots , x_n)$ by using the label of the cell in which it is located, if such a label exists. Note that even a small number of hyperplanes (such as 20) suffices to split $\mathbb{R}^n$ into a huge number of cells (more than a million), and thus can be potentially used to recognize a large number of object classes (anywhere between $1$ and $2^{20}$).

\begin{figure}[ht]
\vskip 0.2in
\begin{center}
\centerline{\includegraphics[width=\columnwidth]{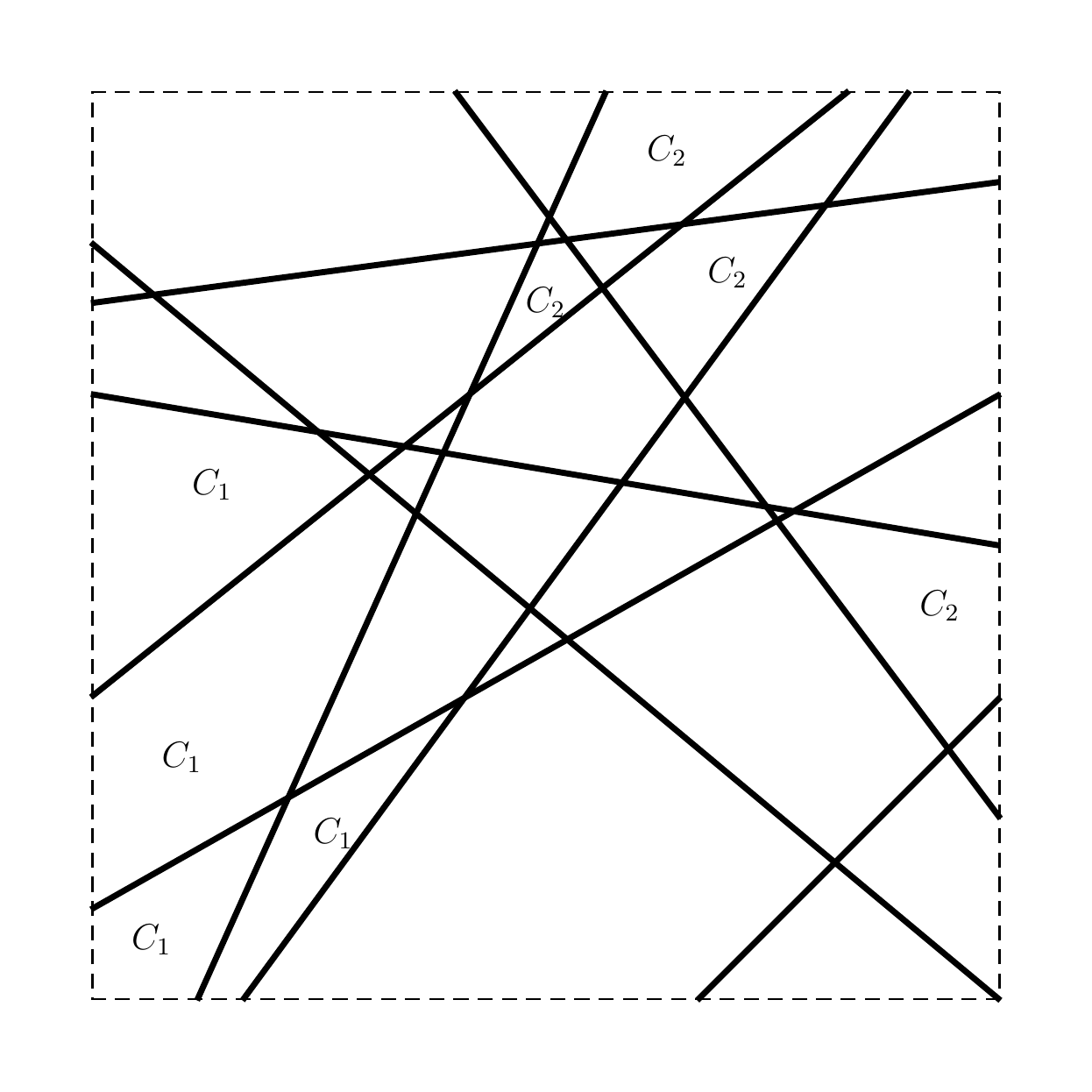}}
\vskip -0.2in
\caption{Separation of $\mathbb{R}^n$ by a hyperplane arrangement}
\label{fig:linearclassifier}
\end{center}
\vskip -0.2in
\end{figure}

Our goal is to study the neighborhood structure of such a partition of $\mathbb{R}^n$ under the $L_0$ (Hamming) distance function. More specifically, given any two cells and a particular point $\pointx$ which is located in the first cell, we would like to determine the smallest possible number $k$ of coordinates in $\pointx$ we have to change in order to reach some point $\pointy$ which is located in the second cell. It is easy to construct a worst case example in which the only way to move from the first cell to the second cell is to change all the coordinates (for example, when the first cell is defined by all the input coordinates being smaller than $1$, and the second cell is defined by all the input coordinates being larger than $2$). However, defining such cells requires a large number ($2n$) of hyperplanes. Our goal now is to show that the expected number $k$ of coordinates we have to change for moderately sized $m$ is surprisingly small.

\begin{theorem}
Consider any partition of $\mathbb{R}^n$ by $m$ hyperplanes defined by $M\point{x}+\point{b}$, and consider any pair of cells $C_1$ and $C_2$ which are defined by the sign vectors $S_1$ and $S_2$, respectively. Then a sufficient condition for being able to move from any $\pointx \in C_1$ to some $\pointy \in C_2$ by changing at most $k$ coordinates is the existence of a linear combination of at most $k$ columns in $M$ which is in the orthant of $\mathbb{R}^m$ specified by the sign vector $S_2$.
\end{theorem}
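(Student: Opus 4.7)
The plan is to turn the algebraic hypothesis into an explicit displacement vector with support of size at most $k$ and then push along it far enough to cross into the target cell.

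Concretely, let $j_{1},\ldots,j_{k}$ be the column indices appearing in the hypothesised linear combination, let $c_{1},\ldots,c_{k}$ be the corresponding coefficients, and let $\mathbf{v}\in\mathbb{R}^{n}$ be the vector whose only nonzero entries are $v_{j_{i}}=c_{i}$, so that $M\mathbf{v}=\sum_{i=1}^{k} c_{i} M^{(j_{i})}$ lies in the orthant of $\mathbb{R}^{m}$ specified by $S_{2}$. Given any $\mathbf{x}\in C_{1}$, I would set $\mathbf{y}=\mathbf{x}+t\mathbf{v}$ for a positive scalar $t$ to be chosen below, and observe that by construction $\mathbf{y}$ differs from $\mathbf{x}$ in at most $k$ coordinates (in fact, only on those $j_{i}$ with $c_{i}\neq 0$).

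Next I would compute $M\mathbf{y}+\mathbf{b}=(M\mathbf{x}+\mathbf{b})+tM\mathbf{v}$. The offset $M\mathbf{x}+\mathbf{b}$ has sign pattern $S_{1}$ and fixed finite magnitudes, while every coordinate of $M\mathbf{v}$ is strictly nonzero with the sign prescribed by $S_{2}$. For each row $\ell$, the sign of $(M\mathbf{y}+\mathbf{b})_{\ell}$ therefore coincides with that of $(M\mathbf{v})_{\ell}$ as soon as $t\cdot|(M\mathbf{v})_{\ell}|$ dominates $|(M\mathbf{x}+\mathbf{b})_{\ell}|$, so choosing
\[
t \;>\; \max_{1\le\ell\le m}\frac{|(M\mathbf{x}+\mathbf{b})_{\ell}|}{|(M\mathbf{v})_{\ell}|}
\]
forces the sign pattern of $M\mathbf{y}+\mathbf{b}$ to be exactly $S_{2}$, placing $\mathbf{y}$ in $C_{2}$.

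The conceptual core is this orthant-domination step: because the chosen combination sits strictly inside the open orthant cut out by $S_{2}$, scaling it by a large $t$ amplifies its contribution until it overpowers the bounded starting offset in every row simultaneously. The one subtlety I would flag is reading ``orthant'' in the strict (open) sense so that $(M\mathbf{v})_{\ell}\neq 0$ for every $\ell$ and the maximum above is finite; this is consistent with the general-position assumption on the hyperplanes and with $S_{1},S_{2}$ being strict sign vectors. The remaining bookkeeping—that at most $k$ coordinates are modified, that columns with $c_{i}=0$ can be dropped without changing the count, and that $t$ may be chosen per input $\mathbf{x}\in C_{1}$—is immediate, so I do not anticipate a real obstacle: the content of the theorem is precisely the observation that the hypothesis is exactly what the scaling argument demands.
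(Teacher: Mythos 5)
Your proposal is correct and takes essentially the same approach as the paper: encode the hypothesised combination of at most $k$ columns as a sparse direction vector, set $\mathbf{y}=\mathbf{x}+t\mathbf{v}$, and observe that by linearity $M\mathbf{y}+\mathbf{b}=(M\mathbf{x}+\mathbf{b})+tM\mathbf{v}$ acquires the sign pattern $S_2$ once $t$ is large enough. Your explicit threshold $t>\max_{\ell}|(M\mathbf{x}+\mathbf{b})_{\ell}|/|(M\mathbf{v})_{\ell}|$ merely makes quantitative the paper's ``for all sufficiently large $c$'' step.
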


\begin{proof}
Assume without loss of generality that $S_1=(+,-,+,-, \ldots)$ and $S_2=(-,+,+,-,\ldots)$. We are allowed to change at most $k$ coordinates in $\pointx$, and would like to use this ability in order to simultaneously move from the positive side of the first hyperplane to its negative side, move from the negative side of the second hyperplane to its positive side, remain on the positive side of the third hyperplane, remain on the negative side of the fourth hyperplane, etc. It is thus necessary to decrease the value of the first linear expression and to increase the value of the second linear expression. As far as the next two expressions, a sufficient (but not necessary) condition is to increase the value of the third expression and to reduce the value of the fourth expression. More formally, assume that we can find a vector $\point{d}$ with at most $k$ nonzero entries such that the signs in $M\point{d}$ are as specified by $S_2$, and consider the vector $\pointy=\pointx+c\cdot\point{d}$ where $c$ is a free parameter. By linearity, $M\pointy+\point{b}=M\pointx+c\cdot M\point{d}+\point{b}$, and thus as we increase the parameter $c$ from $0$, the input $\pointy=\pointx+c\cdot\point{d}$ will move out of cell $C_1$, and eventually cross into cell $C_2$ for all sufficiently large value of $c$.
\end{proof}

We have thus reduced the problem of fooling a given predictor with a targeted attack into the following question: Given $n$, $m$ and $k$, can we expect to find such a sparse $\point{d}$? Our goal is to analyze the ``typical'' case, rather than pathological counterexamples which are not likely to happen in practice.

It is easy to analyse the particular values of $k=1,2$:

\begin{lemma}
Assume that the entries in the $m \times n$ matrix $M$ are randomly and independently chosen in a sign balanced way (ie, the probability of choosing a negative and positive values are the same). Then such a $\point{d}$ with $k=1$ is likely to exist whenever $n$ is larger than $2^m$.
\end{lemma}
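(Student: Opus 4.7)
The plan is to reduce this to a coupon-collector-style calculation on the sign patterns of individual columns of $M$, since the case $k=1$ forces $\point{d}$ to be supported on a single coordinate.

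First I would observe that if $\point{d}$ has only one nonzero entry, say in coordinate $j$, then $\point{d}=c\cdot \point{e}_j$ for some nonzero scalar $c$, and hence $M\point{d}=c\cdot M_{\cdot j}$, where $M_{\cdot j}$ is the $j$-th column of $M$. By Theorem~1, we need the sign vector of $M\point{d}$ to equal $S_2$. Since we are free to choose the sign of $c$, this happens if and only if the sign pattern of $M_{\cdot j}$ equals either $S_2$ or $-S_2$.

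Next I would compute the probability, under the sign-balanced model, that a given column $M_{\cdot j}$ satisfies this condition. Since the entries are chosen independently and each is positive or negative with probability $1/2$, the sign pattern of $M_{\cdot j}$ is a uniformly random element of $\{+,-\}^m$. There are exactly two favorable patterns out of $2^m$, so the probability that column $j$ works is $2^{1-m}$. Because the $n$ columns are independent, the probability that none of them works equals $(1-2^{1-m})^n$, and hence the probability that some column yields a valid $\point{d}$ is $1-(1-2^{1-m})^n$.

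Finally I would plug in $n>2^m$ and bound this quantity away from zero: using $(1-x)^{1/x}\le e^{-1}$ for $x\in(0,1)$ with $x=2^{1-m}$, we obtain $(1-2^{1-m})^n\le (1-2^{1-m})^{2^m}\le e^{-2}$, so the success probability exceeds $1-e^{-2}\approx 0.86$, which justifies the claim that such a $\point{d}$ is ``likely to exist.'' I do not anticipate a real obstacle here; the only subtlety is the factor-of-two saving from being allowed to choose the sign of $c$, which is what makes the relevant probability $2^{1-m}$ rather than $2^{-m}$ and aligns the threshold with the stated bound $n>2^m$.
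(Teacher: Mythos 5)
Your proof is correct and takes essentially the same approach as the paper's: reduce the $k=1$ case to the sign pattern of a single column, note that each column's pattern is uniform over the $2^m$ possibilities, and conclude that with $n>2^m$ columns the required pattern is likely to appear. Your version is in fact tighter than the paper's one-line argument, both in making the probability explicit ($1-(1-2^{1-m})^{n}\ge 1-e^{-2}$ for $n>2^m$) and in exploiting the freedom to take a negative coefficient $c$, which doubles the per-column success probability to $2^{1-m}$ --- a refinement the paper does not use.
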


\begin{proof}
Each column in $M$ is assumed to have a uniform distribution of its sign vector, and thus a particular sign vector $S$ will exist among the $n$ columns whenever $n$ exceeds the number of possible sign vectors which is $2^m$.
\end{proof}

For $m=20$, this requires the dimension $n$ of the input vector to be larger than about a million,which is large but not impractically large. We now show that for $k=2$ this bound on the input dimension drops dramatically. We first consider the following special case:

\begin{lemma}
Consider any two columns $\point{g}=(g_1, \ldots , g_m)$ and $\point{h}=(h_1, \ldots , h_m)$ in $M$ whose entries are nonzero. Then the two dimensional linear space spanned by them passes through exactly $2m$ orthants in $\mathbb{R}^m$.
\end{lemma}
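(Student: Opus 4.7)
The plan is to parametrize the span by the map $\Phi\colon\mathbb{R}^2\to\mathbb{R}^m$, $(\alpha,\beta)\mapsto\alpha\point{g}+\beta\point{h}$, and count orthants by analyzing the sign pattern of $\Phi$ as a function of $(\alpha,\beta)$. The $i$-th coordinate of $\Phi(\alpha,\beta)$ is the linear form $L_i(\alpha,\beta)=\alpha g_i+\beta h_i$, and which orthant of $\mathbb{R}^m$ the image lies in is determined precisely by the vector of signs $(\mathrm{sgn}\,L_1,\ldots,\mathrm{sgn}\,L_m)$. So the task reduces to counting the distinct sign patterns attained by these $m$ linear forms as $(\alpha,\beta)$ varies over $\mathbb{R}^2\setminus\{0\}$.

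Since both $g_i$ and $h_i$ are nonzero for every $i$, each zero set $\{L_i=0\}$ is a proper line through the origin in the $(\alpha,\beta)$-plane (neither horizontal nor vertical degenerate cases occur). By the paper's standing general-position assumption on the hyperplane arrangement (any accidental degeneracy can be removed by an arbitrarily small perturbation), these $m$ lines are pairwise distinct. A standard elementary count then gives that $m$ distinct concurrent lines partition the punctured plane into exactly $2m$ open sectors. Within each open sector none of the $L_i$ vanish, so the sign vector of $\Phi$ is constant on it, meaning the sector maps entirely into a single orthant of $\mathbb{R}^m$.

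It remains to argue that different sectors map to different orthants, which gives exactly $2m$ distinct orthants overall. Two distinct open sectors are necessarily separated by at least one of the lines $\{L_j=0\}$, and crossing such a line flips the sign of $L_j$ while leaving every other $L_i$ with a definite nonzero value on both sides; hence the two sectors produce sign vectors that differ in at least the $j$-th coordinate, so the induced orthants are distinct. Combining these observations yields the count of exactly $2m$ orthants, as claimed.

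The only step requiring real care is the justification that the $m$ lines are distinct (so that one actually gets $2m$ sectors rather than fewer); everything else is bookkeeping. This is where the general-position hypothesis carries the weight of the argument, and it is the one place a reader might object if the hypothesis were not made explicit.
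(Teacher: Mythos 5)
Your proof is correct and follows essentially the same route as the paper's: both count the sign patterns of the $m$ linear forms $\alpha g_i+\beta h_i$ over the two-dimensional coefficient space, the paper by sorting the ratios $h_i/g_i$ and placing $u/v$ in one of $m+1$ intervals (doubled for the sign of $v$, with two extreme placements corrected for double counting), you by observing that the $m$ distinct concurrent lines $\{\alpha g_i+\beta h_i=0\}$ cut the $(\alpha,\beta)$-plane into $2m$ open sectors. Your packaging is marginally cleaner---it avoids the division by $vg_i$ and the double-counting correction, and it makes explicit that distinct sectors yield distinct orthants---but the underlying argument and the genericity assumption (no two pairs $(g_i,h_i)$ proportional, i.e.\ no two equal ratios) are the same as in the paper.
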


\begin{proof}
Let $u,v$  be the coefficients of these vectors, and assume that we want the $m$ values $ug_i-vh_i$ for $1 \leq i \leq m$ to have particular signs. Dividing the $i$-th inequality by $vg_i$, we get $m$ inequalities of the form $u/v < h_i/g_i$ or $u/v > h_i/g_i$, depending on the desired sign as well as on the sign of $vg_i$. This is a set of $m$ conditions on the value of $u/v$, which is satisfiable if and only if all the specified lower bounds on $u/v$ are smaller than all the specified upper bounds. Without loss of generality, we can reorder the rows of the matrix so that the values of $g_i/h_i$ are monotonically increasing. We can then place $u/v$ in any interval between two consecutive values of $g_i/h_i$ and $g_{i+1}/h_{i+1}$, and for each choice work backwards what are the sign patterns of the two opposite orthants (one for positive $v$ and one for negative $v$)
which are crossed in this case. Assuming that no two fractions are the same, there are $m+1$ subintervals in which we can place $u/v$, and thus we get a total of $2m+2$ possible orthants, but two of them (corresponding to the extreme placements of $u/v$) had been counted twice. The total number of orthants which are crossed by the two dimensional subspace is thus $2m$.
\end{proof}

Since there are $n(n-1)/2$ possible choices of pairs of columns in $M$ and each one of them crosses $2m$ of the $2^m$ possible orthants, it seems plausible that $n$ should be larger than $\sqrt{2^m/m}$. However, we could not formally prove such a result since the orthants crossed by the two pairs of columns $(\point{g},\point{h}_1)$ and $(\point{g},\point{h}_2)$ are slightly correlated (since in both sets of orthants must contain the two orthants which are covered by $\point{g}$ alone), but extensive experimentation showed that it is an excellent approximation. In particular, for $m=20$ and $n=250$ (which is approximately $\sqrt{2^{20}/20}$), using all pairs of columns in a random matrix covered about $35 \%$ of the possible orthants, whereas in the fully uncorrelated case we could expect to cover about $70 \% $ of the orthants. In both cases this coverage increases rapidly towards $100 \% $ as n increases beyond $250$. Consequently, both the theory and our experiments indicate that for most practically occurring input dimensions, we can expect to be able to move from any point $\pointx$ in any one of the million cells to some point $\pointy$ in any other cell by changing just two coordinates in $\pointx$.

The problem of determining the number of orthants of $\mathbb{R}^m$ which are crossed by a $k$ dimensional linear subspace is closely related to the problem of determining the signrank of matrices (ie, determining the smallest possible rank of any matrix whose entries have particular signs), which was studied in~\citet{Alon}. A direct proof can be found in~\citet{stackexchange}, where it is shown that this number is ${2\sum_{d=0}^{k-1}} \binom{m-1}{d}$ which can be approximated as $m^{(k-1)}$. If we again make the (nonrigorous) assumption that the coverage provided by different choices of $k$ out of the $n$ columns of $M$ are independent, we get the general asymptotic bound that a Hamming distance of $k$ should be sufficient in targeted attacks on hyperplane arrangement based models whenever the input dimension $n$ exceeds $2^{m/k}/m^{(k-1)/k}$.

\section{Adversarial Examples in Deep Neural Networks}
\label{sec:DNN}

\subsection{Our Basic Algorithm}

In this subsection we describe a new conceptual way to generate adversarial examples which can be quantitatively analyzed, and in particular it will be easy to understand why it creates adversarial examples with a particular Hamming distance.

We assume that we are given a fully trained deep neural network with $t$ ReLU's which assigns to each input $\pointx \in \mathbb{R}^n$  a vector of $m$ real numbers, which describe the network's confidence levels that x belongs to each one of the $m$ classes. We can thus view the neural network as a piecewise linear mapping from $\mathbb{R}^n$ to $\mathbb{R}^m$, and assume that $m\ll n$. Our algorithm is independent of the actual labeling procedure of $\pointx$ used by the neural network (e.g., softmax, rejection of examples with small confidence level). \remove{ or any arbitrarily function of the $m$ values).}

The basic idea in our proof is to consider any two input vectors $\pointx$ and $\pointy$ (which can be completely different in all their coordinates) whose labels are $C_1$ and $C_2$, and to gradually change $\pointx$ into a new $\pointz$ which will get exactly the same vector of confidence levels as $\pointy$, and thus will also be labeled by the given neural network as being in class $C_2$. We start our algorithm with $\pointx_1=\pointx$, and in the $i$-th step of the algorithm we slightly modify the current $\pointx_i$ into $\pointx_{i+1}$ by changing only about $m$ of its $n$ coordinates, until we reach $\pointx_p=\pointz$. The crucial property of the $L_0$ norm we use is that if we always change the same input coordinates, then regardless of how many times we change these coordinates and by how much we change them, the Hamming distance between the endpoints $\pointx$ and $\pointz$ of the chain will remain bounded by about $m$, and thus $\pointz$ will be an adversarial example which is very close to $\pointx$ and yet fools the network into switching the label from $C_1$ to $C_2$. Note that if we replace $L_0$ by $L_2$ (or any other norm), it is no longer true that the accumulated effect of a huge number of tiny changes remains tiny, and thus our argument will fail. In addition, we use the fact that the neural network is memoryless, and thus its final decision depends only on the final $\pointz$ we provide, and not on the intermediate $\pointx_i$ we pass through.

Our first step is to smoothly transition between $\pointx$ and $\pointy$ by linearly combining the two vectors. Define $$\pointx_{\alpha}=(1-\alpha) \cdot \pointx + \alpha \cdot \pointy$$ This connects $\pointx$ and $\pointy$ by a straight line in the input space as $\alpha$ ranges between zero and one. The mapping defined by the neural network is piecewise linear, and thus the straight line in the $n$ dimensional input space is mapped into a piecewise linear line in the $m$-dimensional output space, as depicted in Fig.~\ref{fig:Input}. This line consists of a finite number of straight line segments, where each change of direction is caused by one of the ReLU's switching between a negative and a positive input. Note that the output space can contain forbidden regions (such as the hatched region in the figure) which cannot be reached from any input vector, but the path we chose is guaranteed to avoid such regions since all the points along it are actually generated by some input vectors.

\begin{figure}[ht!]
\vskip -0.2in
\begin{center}
\centerline{\includegraphics[width=\columnwidth]{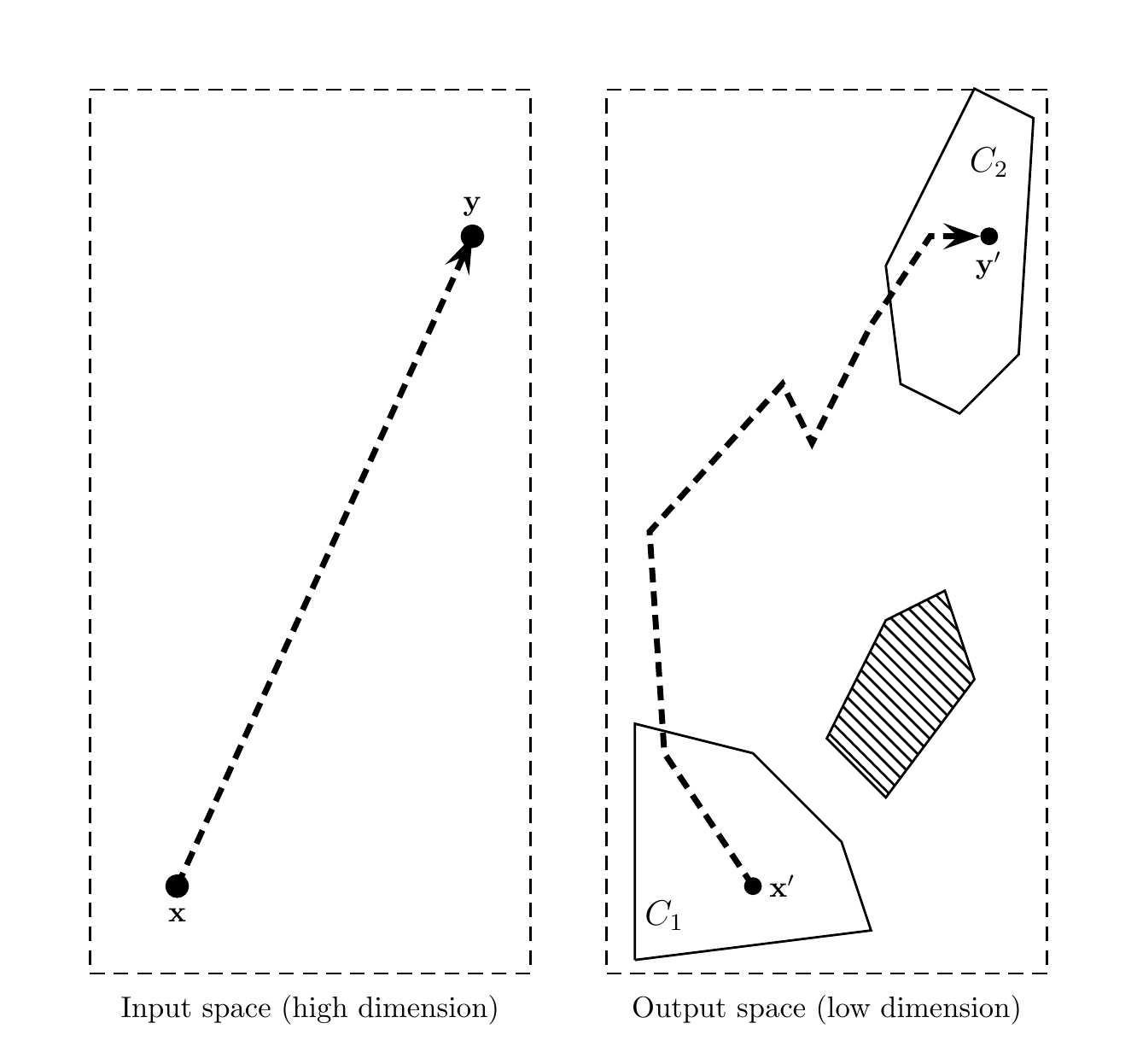}}
\caption{Movement in Input Space}
\label{fig:Input}
\end{center}
\vskip -0.2in
\end{figure}

\begin{figure}[ht!]
	\vskip -0.2in
	\begin{center}
		\centerline{\includegraphics[width=\columnwidth]{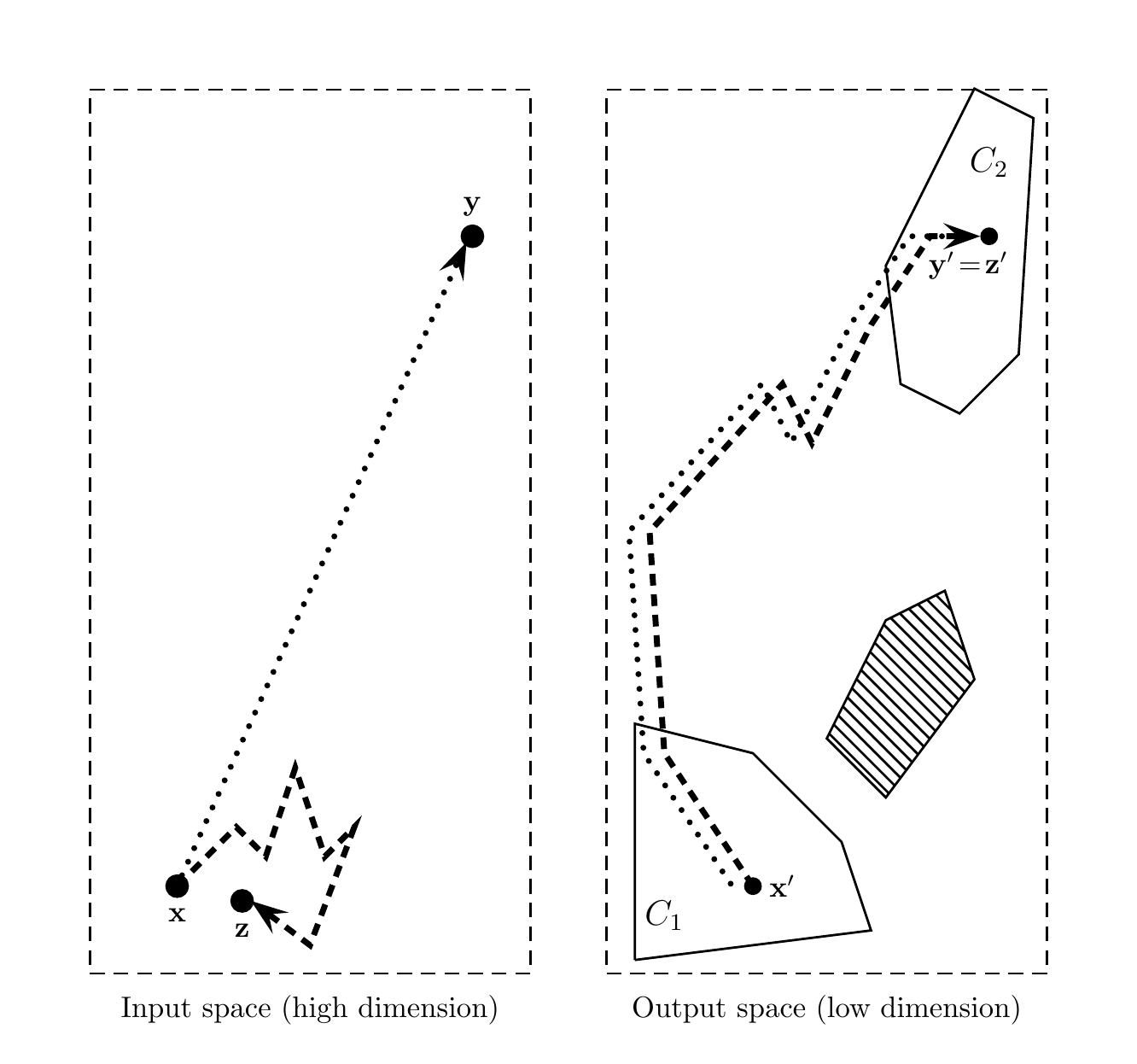}}
		\caption{Movement in Output Space}
		\label{fig:Output}
	\end{center}
	\vskip -0.2in
\end{figure}

\begin{algorithm}[ht]
	\caption{Basic Algorithm}
	\label{alg:basic}
	\begin{algorithmic}[1]
		\STATE {\bfseries Input:} $\pointx\in C_1$, target $C_2$, neural network $NN$.
		\STATE Compute $\pointx' = NN(\pointx)$.
		\STATE Pick an arbitrary $\pointy \in C_2$, and compute $\pointy'=NN(\pointy)$ in the output space.
		\STATE Connect $\pointx$ and $\pointy$ in a straight line in the input space as defined by $\pointx_{\alpha}$.
		\STATE Map the line (using $NN$) into a piecewise linear line between $\pointx'$ and $\pointy'$, denoted by $path$.
		\STATE Set $tmp \leftarrow \pointx$ and $tmp' \leftarrow \pointx'$.
		\STATE Choose an arbitrary subset of $m$ (out of $n$) input variables.
		\REPEAT
		\STATE Describe the linear map at the vicinity of $tmp$ to $tmp'$ as an $m\times m$ matrix $M'$.
		\STATE Find the direction in the reduced input space using $M'^{-1}$  that follows $path$.
		\STATE Advance $tmp$ and $tmp'$ in the direction of the path until a ReLU boundary is found, $path$ changes direction, or $\pointy'$ is reached.
		\UNTIL{$tmp' = \pointy'$}
	\end{algorithmic}
\end{algorithm}

Consider now any subset of $m$ coordinates in the input vector $\pointx$, and assume that only these coordinates in $\pointx$ are allowed to change, and all the other coordinates are frozen in their original values. This effectively reduces the input dimension from $n$ to $m$, and thus we get a piecewise linear mapping from an $m$-dimensional input space to an $m$-dimensional output space. Assuming that all the linear mappings we encounter are invertible, we can use the local inverse mappings to assign to the piecewise linear line in the output space a unique piecewise linear line in the reduced input space which will map to it under the piecewise linear map defined by the given neural network, as described in Fig.~\ref{fig:Output}. The resulting algorithm is given in Alg.~\ref{alg:basic}.

\subsection{Potential obstacles}

There are two potential modes of failure in this algorithm: Either we get stuck after a finite number of steps (a {\it hard failure}), or we run forever without reaching the destination in the output space (a {\it soft failure}).
\begin{figure}[ht!]
	\vskip -0.2in
	\begin{center}
		\subfigure[Reflection at a ReLU Boundary\label{fig:reflection}]{\includegraphics[width=0.45\columnwidth]{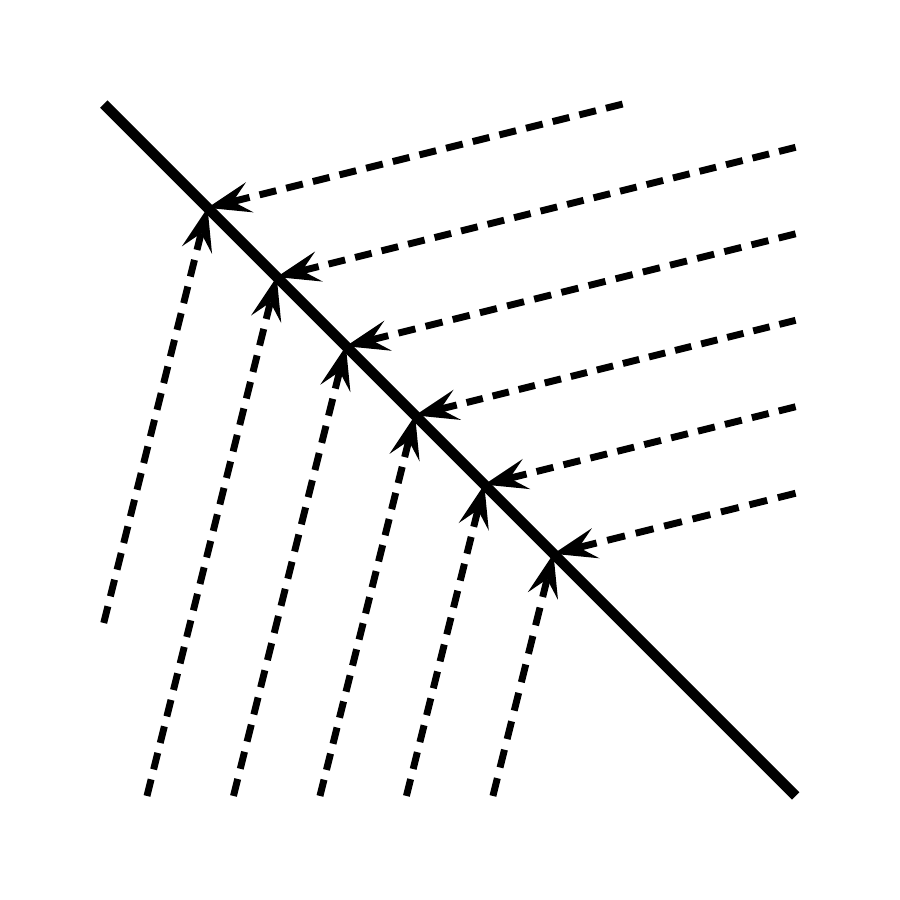}}~
		\subfigure[A Sinkhole\label{fig:sinkhole}]{\includegraphics[width=0.45\columnwidth]{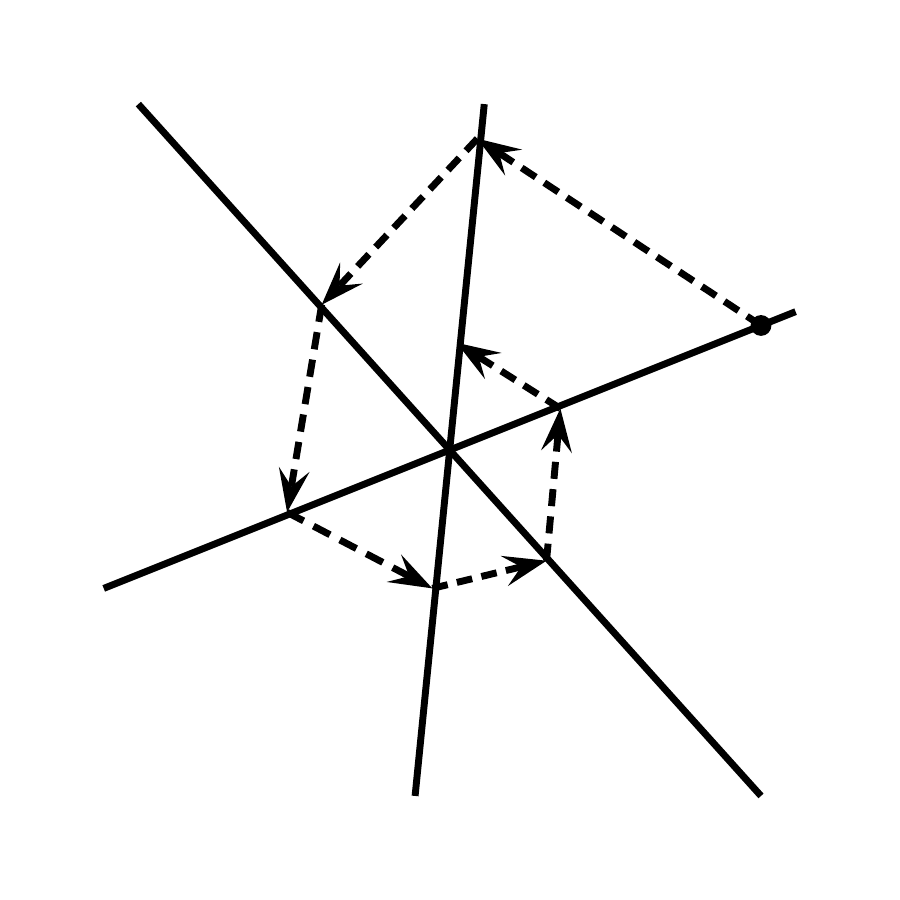}}
		\caption{Possible Obstacles}\label{fig:obstacle}
		
	\end{center}
	\vskip -0.2in
\end{figure}

\subsubsection{Hard Failures}
A hard failure can happen in two possible ways: Either there is no direction we can follow in the input space, or there is no way to cross some ReLU boundary. The first situation happens when we encounter a singular matrix $M'$ so that its inverse cannot be applied to the direction we want to follow in the output space. However, this is extremely unlikely to happen, since the probability that a random $m \times m$ matrix of real numbers with infinite precision will be singular is zero, and even when it happens any small perturbation of its entries will make it invertible. In addition, there are only finitely many possible linear input/output mappings which can be represented by a given network (since its $t$ ReLU's can be in at most $2^t$ combinations of sides), and finitely many ways to choose a subset of $m$ out of the $n$ columns, and thus by the union bound the probability that any one of them will be singular is zero.

The second situation (which we call {\it reflection}) happens when we are on one side of a ReLU boundary, move towards it, but as soon as we cross the boundary to the other side the linear mapping changes so that we now want to move back towards the original side. This situation (which is depicted in Fig.~\ref{fig:reflection}) gets us stuck at the boundary, without being able to proceed to either side. \remove{This situation is not expected to happen often since in a high dimensional space we usually move almost perpendicularly towards any boundary, and switching a single ReLU in the middle of the network tends to deflect
the direction of movement only by a small amount after we cross the boundary, but we did encounter it in our experiments.}

\subsubsection{Overcoming Reflection-Type Hard Failures}
An easy way to avoid any reflection-type hard failure is to allow $m+\Delta$ input variables to change instead of $m$, where $\Delta$ is a small positive integer. In this case the relevant part of the input space is of dimension $m+\Delta$, and the dimension of the output space remains $m$. Assume that we have just crossed a ReLU boundary at the $m+\Delta$ dimensional point $\pointx_i$ in the input space, and want to move towards the $m$-dimensional point $\pointx'_i$ in the output space. Assuming that the current matrix has full rank, we can find the subspace in the input space that is mapped to $\pointx'_i$ by the linear mapping which is in effect on the far side of the ReLU \remove{(e.g., for $\Delta=1$ this is a one dimensional line)}. Since we assume that all the boundaries are always in general position, this subspace will intersect the ReLU boundary hyperplane, and part of it will be on the far side of this boundary. We can thus pick any point $\pointx_{i+1}$ on this part of the subspace, and by going from $\pointx_i$ towards $\pointx_{i+1}$ in the input space, we will be able to avoid the reflection-type hard failure by moving towards the point $\pointx'_i$ in the output space using the correct linear mapping which is in effect in the region we traverse. In particular, when we choose $\Delta=1$, the subspace of points we want to move towards is a one dimensional line, and we can choose a random point on the ray that is cut from this line by the far side of the ReLU (see Fig.~\ref{fig:improvalg}). The changes to the original algorithm are described in Alg.~\ref{alg:improved}.

\begin{algorithm}[ht!]
	\caption{Improved Algorithm (replacing lines 7--12 of Algorithm~\ref{alg:basic})}
	\label{alg:improved}
	\begin{algorithmic}
		\STATE {\bfseries Input:} $\pointx\in C_1$, target $\pointy\in C_2$, neural network $NN$.
		\STATE $\pointx' = NN(\pointx)$, $\pointy'=NN(\pointy)$, $path$ from $\pointx'$ to $\pointy'$.
	\end{algorithmic}
	\begin{algorithmic}[1]
		\setcounter{ALC@line}{6}
		\STATE Choose an arbitrary subset of $m+\Delta$ (out of $n$) input variables.
		\REPEAT
		\STATE Describe the linear map at the vicinity of $tmp$ to $tmp'$ as an $(m+\Delta)\times m$ matrix $M'$.
		\STATE Find the $\Delta$-dimensional space in the reduced input space that follows $path$.
		\STATE Choose a direction to a random point in the part of this $\Delta$-dimensional space which was on the far side of the ReLU.
		\STATE Advance $tmp$ and $tmp'$ in the direction of the path until a ReLU boundary is found, $path$ changes direction, or $\pointy'$ is reached.
		\UNTIL{$tmp' = \pointy'$}
	\end{algorithmic}
\end{algorithm}

\begin{figure}[ht!]
	\vskip -0.2in
	\begin{center}
		\centerline{\includegraphics[width=1.15\columnwidth]{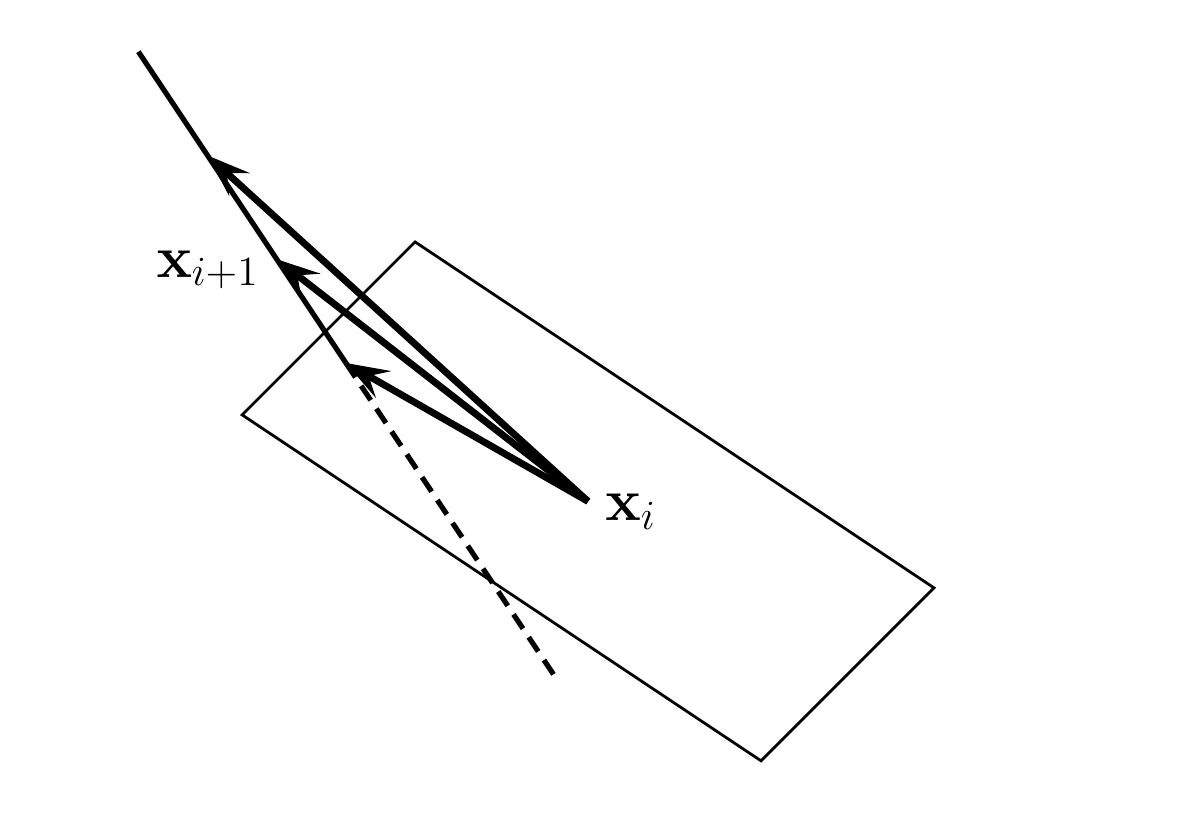}}
		\caption{Choosing a direction using $m+1$ input variables}
		\label{fig:improvalg}
	\end{center}
	\vskip -0.2in
\end{figure}

\subsubsection{Soft Failures}
Consider now how a soft failure can occur in Alg.~\ref{alg:basic}. One possibility is that the path in the input space will wander around without converging to a single limit point (e.g., by following a three dimensional helix with constant radius and diminishing pitch which converges towards a two dimensional limit circle). We claim that such a failure cannot happen. The reason is that there is a finite number of $m \times m$ invertible linear mappings we can encounter, and thus all their eigenvalues are bounded from above, from below, and away from zero. Consequently, the ratio between the lengths of any pair of corresponding paths in the input and output spaces is bounded by some constant, so we cannot have an infinitely long path in the input space which maps to  the finitely long partial path we manage to follow in the output space.

This leaves only one possible failure mode in the algorithm - performing an infinite number of moves of diminishing length in the input space (so that its total length is finite) which converges to some singular point along the path we try to follow in the output space. Such a situation (which we call a {\it sinkhole}) is depicted in Figure~\ref{fig:sinkhole}, where in each region the linear mapping tries to move perpendicularly to the ReLU boundary which is nearest it in counterclockwise direction. We can detect such a failure and artificially stop the algorithm by requiring that each step should move us at least $\epsilon$ closer to the destination in the output space for some tiny constant $\epsilon$. One way to remedy such a failure is to use the extra $\Delta$ degrees of freedom and our ability to randomize the algorithm in order to to explore new directions we can follow in the input space which can possibly bypass the problematic points along the output path. As demonstrated in Section \ref{sec:exp}, it was sufficient in our experiments to choose $\Delta=1$ in order to overcome both the hard and the soft failures.

\remove{In our experiments we did encounter situations in which restricting the Hamming distance to $m+1$ led to a sinkhole, but allowing distances between $m+2$ and $2m$ solved the problem and led to the creation of an adversarial example after finitely many steps.}
\remove{
The third obstacle (which we call a hard stop) is the existence of a point along the path in the output space which can be reached after a finite number of steps, but does not allow us to proceed any further. To demonstrate an extreme example of this phenomenon, consider the following choice of a function f, which maps $n$ input values to one output value:

$f(x_1,x_2, \ldots ,x_n) = \sum_{j=1}^n{x_j} - n \sum_{j=1}^n{|x_j-x_{j+1}|}$

where $x_{n+1}$ is defined as $x_1$. Since each $|x_j-x_{j+1}|$ value can be expressed as $ReLU(x_j-x_{j+1}) + ReLU(x_{j+1}-x_j)$, this function can be realized by an extremely simple neural network with a single hidden layer containing ReLU's followed by one linear output layer. Consider now the starting value of $X=(1,0,0, \ldots ,0)$, for which the value of $f$ is $1-2n$. Our goal is to increase the value of $f$ as much as possible, under the sole constraint that the first coordinate must remain unchanged at $1$. When we increase the value of $x_2$ in $\pointx$ from $0$ to $1$, we benefit moderately since the first term increases by $1$, while the second term remains unchanged (the changes in $|x_1-x_2|$ and $|x_2-x_3|$ cancel each other out). However, when we try to increase $x_2$ beyond $1$ to $1+\epsilon$, the moderate benefit turns into a sharp loss, since the first term increases by $\epsilon$, while the second term decreases by $n \epsilon$, so we must stop at $x_2=1$. Once $x_2$ had been increased to $1$, we can repeat the same argument about $x_3, x_4 \ldots, x_{n-1}$, etc., slowly increasing the value of $f$ from $1-2n$ to $(n-1)-2n$. Finally, when we also change $x_n$ from $0$ to $1$, the value of $f$ jumps from $-n-1$ to $n$, but no further increase is possible for any choice of $x_2,x_3, \ldots , x_n$ as long as $x_1=1$. This represents a hard stop in our algorithm, which exists even when we are allowed to change $n-1$ of the $n$ input coordinates. However, when we unfreeze $x_1$ and allow all the $n$ coordinates to change, we can reach any positive value $nc$ by setting all the input coordinates to $c$.

The reason we get stuck in this example is that to increase the value of $f$ we want to move towards some ReLU boundary, but as soon as we cross it to the other side the linear mapping changes sharply, so we want to move back towards the original side (as depicted in Fig.~\ref{fig:reflection}). This forces us to remain on the $n-1$ dimensional hyperplane of the ReLU, which reduces our degrees of freedom by $1$. Once we lose all our available degrees of freedom by having to remain on additional intersecting hyperplanes, we get completely stuck. Intuitively, this should be a rare case if the direction change encountered in the overall input/output mapping when a single ReLU switches is small.  However, we did encounter it occasionally in our experiments, but we always managed to overcome this obstacle by allowing a slightly larger Hamming distance (beyond $m$) in the targeted adversarial attack. This provided additional degrees of freedom to the choice of path in the input space, which made it possible to continue the attack. Providing a rigorous analysis of this potential obstacle and its solution seems to be a very difficult task, which we leave as a challenging open problem.
}

\section{Experimental results}\label{sec:exp}

In this section, we empirically explore the $ L_0 $ perturbation required for generating adversarial attacks on the MNIST dataset using Alg.~\ref{alg:basic} and Alg.~\ref{alg:improved}. More concretely, we trained a simple one-hidden layer ReLU network of width 256, for 200 epochs to 95.72\% accuracy on the test set using a Pytorch based git from https://github.com/junyuseu/pytorch-cifar-models. Thereafter, we ran 1000 instantiations of Alg.\footnote{Instead of projecting onto the ray (as in Fig~\ref{fig:improvalg}), we implemented a random projection onto the line, as this was sufficient to get satisfactory results.}~\ref{alg:improved}, attempting to follow the path using finite precision calculations, monitoring the distance of each instantiation from the output path as depicted in Fig.~\ref{fig:Output}, while keeping the best result and terminating any new instantiations deviating from it or exceeding 2000 iterations. Choosing the 11 pixels having the largest standard deviation among the images in the training set, we were able to perturb the \emph{same} pixels to change the prediction of a digit '7' to that of any other digit (see Fig.~\ref{fig:onevsall}). While many instantiations do not converge to their destination, eventually successful runs manage to follow the full path (see Tab. \ref{tbl:confidence})

The main difference between the two algorithms is that Alg.~\ref{alg:basic} is deterministic, and thus any failure is fatal, whereas Alg.~\ref{alg:improved} is randomized and can use its extra degree of freedom to eventually avoid soft failures and find adversarial examples of Hamming distance $ m+1 $. Our experiments show that at least for the MNIST dataset, a Hamming distance of $ m+1 $ suffices to produce successful fully targeted attacks.

\begin{figure}[t]
	\vskip -0.0in
	\begin{center}
		\centerline{\includegraphics[width=\columnwidth]{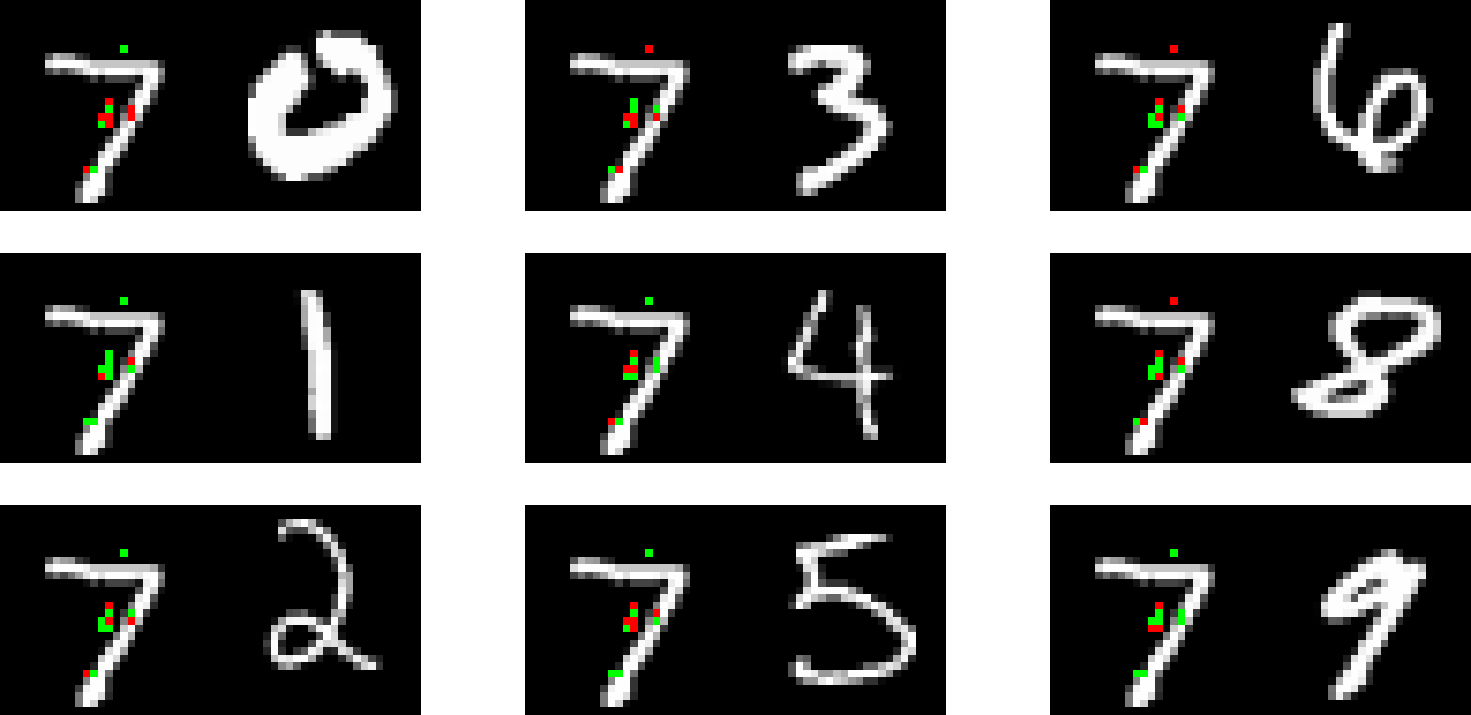}}
		\caption{Using the same set of 11 pixels, different perturbations change the input digit '7' to output the prediction vector of any other digit. Pixels in red have decreased values and pixels in green have increased values. Best viewed in color.}
		\label{fig:onevsall}
	\end{center}
	\vskip -0.2in
\end{figure}

\ifICML
\begin{center}
	\begin{table}\label{tbl:confidence}
		\begin{tabular}{l|c|c|c|c|c}
			\textbf{Digit}: & \textbf{0} & \textbf{1} & \textbf{2} & \textbf{3} & \textbf{4} \\ \hline
			Original    & -0.43      & 0.3        & 5.33       & 2.37       & -2.21      \\
			Target      & -10.62     & 1.96       & -0.99      & 18.69      & -1.06      \\
			Adversarial & -10.62     & 1.96       & -0.99      & 18.69      & -1.06
		\end{tabular}
		\vskip 0.2cm
		\begin{tabular}{l|c|c|c|c|c}
			\textbf{Digit}: & \textbf{5} & \textbf{6} & \textbf{7} & \textbf{8} & \textbf{9} \\ \hline
			Original    & 1.72       & -6.11      & 11.98      & -10.54     & -2.7       \\
			Target      & 6.78       & -20.27     & 4.52       & -8.33      & 8.84       \\
			Adversarial & 6.78       & -20.27     & 4.52       & -8.33      & 8.84
		\end{tabular}
		\caption{Prediction output values of the results when perturbing a '7' into a '3'.}
	\end{table}
\end{center}
\else
\begin{table}\label{tbl:confidence}
	\begin{center}
		\vspace{-2ex}
		\begin{tabular}{l|c|c|c|c|c|c|c|c|c|c}
			\textbf{Digit}: & \textbf{0} & \textbf{1} & \textbf{2} & \textbf{3} & \textbf{4}  & \textbf{5} & \textbf{6} & \textbf{7} & \textbf{8} & \textbf{9} \\ \hline
			Original    & -0.43      & 0.3        & 5.33       & 2.37       & -2.21      & 1.72       & -6.11      & 11.98      & -10.54     & -2.7       \\
			Target      & -10.62     & 1.96       & -0.99      & 18.69      & -1.06      & 6.78       & -20.27     & 4.52       & -8.33      & 8.84       \\
			Adversarial & -10.62     & 1.96       & -0.99      & 18.69      & -1.06 & 6.78       & -20.27     & 4.52       & -8.33      & 8.84
		\end{tabular}
		\vspace{2ex}
		\caption{Prediction output values of the results when perturbing a '7' into a '3'.}
	\end{center}
\end{table}
\fi

\section{Implications for Adversarial Examples  Mitigations}
\label{sec:Implications}

A natural conjecture that follows from our algorithms suggests that as long as
the transformation from the input space to the output space is piecewise
linear, one should expect (targeted) adversarial
examples of low $L_0$ norm.

Such a conjecture implies that any mitigation technique which does not change
the number of classes or the overall nature of piecewise linearity, is going
to be susceptible to (possibly new) adversarial examples. Hence, techniques
that add layers to the network
(e.g., autoencoders
as in~\cite{autoencoders}) or change the structure of the network, are
also going to be susceptible to new adversarial examples.

The conjecture is even more interesting in the context of adversarial
training~\cite{AdvTraining1,AdvTraining2}. While this approach seems to
offer an increased resilience to the same type of adversarial noise the
networks were trained to mitigate, new adversarial examples constructed
for the new networks exist. This behavior is easily explained by our
conjecture---the fact that the piecewise linear behavior and the number
of classes remains. Hence, one should expect new (targeted) adversarial examples
with low hamming distance even against these adversarially trained networks.

While the above ideas (and similar ones) seem to offer little resilience
to adversarial examples, we can point out two mitigation methods that do
not follow the conjecture, and thus may work.
The first is using image filters such
as the median filter as proposed in~\cite{DeepCAPTCHA}. These filters are
usually noncontinuous, resulting in breaking the input-space to output-space
translation into (many) isolated sub-problems.\footnote{We note that~\cite{DeepCAPTCHA} also proposes how to build adversarial examples when these filters are used. However, these adversarial examples are not of low Hamming distance.}
Another popular approach is to add random noise to the input~\cite{randomize},
thus altering the starting point $\pointx$ (as well as $\pointx'$). This
fuzziness in the exact location may affect the
accuracy of the network and  require
sufficiently large amount of entropy for the randomness to prevent
enumeration attempts~\cite{athalye2018obfuscated}.

\section{Conclusions and Open Problems}
\label{sec:Conclusions}

In this paper we developed a new way to think about the phenomenon of adversarial examples, and in particular we explained why we expect to find in our model adversarial examples with a Hamming distance of $m+1$ in neural networks which are designed to distinguish between $m$ possible classes. We experimentally verified this prediction using the MNIST dataset, where our algorithm failed to find any examples with a Hamming distance of $10$, but found a set of $11$ out of the $784$ pixels whose manipulation could change the prediction from one digit to any one of the other $9$ digits.

There are many interesting questions left open by our research:

\begin{enumerate}
	
	\item We use the fact that performing an arbitrarily large number of changes but in the same input variables does not increase the Hamming distance.
	Is there any alternative argument which can be used to analyze $L_2$ distances?
	
	\item In domain-specific examples such as images, we typically want to keep the modified values bounded. One way to achieve this is to use a larger Hamming distance, which allows us to change more input variables by smaller amounts. What are the possible tradeoffs between these two parameters?
	
	\item We explained why our algorithm never encounters a hard failure when the allowed Hamming distance is $m+1$ or higher. Is there a way to show that it does not encounter soft failures, in which it runs forever but converges only to an intermediate point along its desired trajectory?
	Can such points be avoided by a different choice of this trajectory?
	
	\item Our analysis shows that adversarial training of the neural network should have no effect on the existence of adversarial examples. What are all the other implications of our results?
	
\end{enumerate}

\subsection*{Acknowledgements}
We would like to thank Margarita Osadchy, Nathan Keller and Noga Alon for illuminating discussions, and Ben Feinstein for his valuable help with the experiment.

Orr Dunkelman is partially supported by the Center for Cyber 
Law \& Policy at the University of Haifa in conjunction with the Israel National Cyber Directorate in the Prime Minister’s Office. Eyal Ronen is partially supported by the Robert Bosch Foundation.

\Urlmuskip=0mu plus 1mu\relax

\bibliography{bib}
\bibliographystyle{icml2019}

\end{document}